\providecommand{\R}{\mathbb{R}}
\providecommand{\SO}{\mathbf{SO}}
\providecommand{\SE}{\mathbf{SE}}
\providecommand{\IN}{\mathbf{IN}}
\providecommand{\grpG}{\mathbf{G}}
\providecommand{\grpSD}{\mathbf{G_{SD}}}
\providecommand{\gothse}{\mathfrak{se}}
\providecommand{\gothg}{\mathfrak{g}}
\providecommand{\so}{\mathfrak{so}}
\providecommand{\se}{\mathfrak{se}}
\providecommand{\sdpgrpG}{\mathbf{G}_{\mathfrak{g}}^{\ltimes}}
\providecommand{\calG}{\mathcal{G}}
\providecommand{\calM}{\mathcal{M}}
\providecommand{\torSE}{\mathcal{SE}}
\providecommand{\torIN}{\mathcal{IN}}
\providecommand{\vecL}{\mathbb{L}}
\providecommand{\td}{\mathrm{d}}
\providecommand{\Fr}[2]{\left. \mathrm{D}_{#1} \right|_{#2}}
\providecommand{\scirc}{%
    \hbox{\fontfamily{\rmdefault}\fontsize{0.4\dimexpr(\f@size pt)}{0}\selectfont{\raisebox{-0.52ex}[0ex][-0.52ex]{$\circ$}}}}
\mathchardef\mhyphen="2D
\theoremstyle{plain}
\newtheorem{theorem}{Theorem}[section]
\newtheorem{lemma}[theorem]{Lemma}
\theoremstyle{definition}
\theoremstyle{remark}
\newcommand{\set}[2]{\left\{ #1 \;\middle|\; #2 \right\}}
\newcommand{\st}{\;|\;}
\newcommand{\AtoB}[2]{\;:\;#1\;\rightarrow\;#2}
\newcommand{\chrulefill}{%
  \leaders\hrule height \dimexpr\fontdimen22\scriptscriptfont2+0.2pt\relax
                 depth -\dimexpr\fontdimen22\scriptscriptfont2-0.2pt\relax
          \hfill}
\newcommand{\xoverline}[2]{%
  \mathord{
    \vbox{\offinterlineskip
      \halign{##\cr
        \chrulefill$\,\scriptscriptstyle#1\,$\chrulefill\cr
        \noalign{\kern.2ex}
        $#2$\cr
      }%
    }%
  }%
}
\providecommand{\td}{\mathrm{d}}
\providecommand{\Fr}[2]{\mathrm{D}_{#1}\big|_{#2}}
\newcommand{\eye}{\mathbf{I}}
\newcommand{\Adsym}[2]{\mathrm{Ad}_{#1}\left[#2\right]}
\newcommand{\AdMsym}[1]{\mathbf{Ad}^\vee_{#1}}
\newcommand{\adMsym}[1]{\mathbf{ad}^\vee_{#1}}
\providecommand{\xizero}{\mathring{\xi}}
\newcommand{\Vector}[3]{\prescript{#1}{}{\bm{#2}}_{#3}}
\newcommand{\dotVector}[3]{\prescript{#1}{}{\dot{\bm{#2}}}_{#3}}
\newcommand{\hatVector}[3]{\prescript{#1}{}{\hat{\bm{#2}}}_{#3}}
\newcommand{\Pose}[2]{\prescript{#1}{}{\mathbf{T}}_{#2}}
\newcommand{\PoseS}[2]{\prescript{#1}{}{\mathbf{S}}_{#2}}
\newcommand{\PoseP}[2]{\prescript{#1}{}{\mathbf{P}}_{#2}}
\newcommand{\VAtt}[2]{\prescript{#1}{}{\mathbf{V}}_{#2}}
\newcommand{\Rot}[2]{\prescript{#1}{}{\mathbf{R}}_{#2}}
\newcommand{\dotPose}[2]{\prescript{#1}{}{\dot{\mathbf{T}}_{#2}}}
\newcommand{\hatPoseP}[2]{\prescript{#1}{}{\hat{\mathbf{P}}_{#2}}}
\newcommand{\frameofref}[1]{\{$#1$\}}
\providecommand{\bw}{\Vector{}{b}{\bm{\omega}}}
\providecommand{\ba}{\Vector{}{b}{a}}
\providecommand{\angv}{\Vector{}{\omega}{}}
\providecommand{\acc}{\Vector{}{a}{}}
\providecommand{\ethree}{\Vector{}{e}{3}}
\newcommand{\figref}[1]{Fig.~\ref{fig:#1}}
\newcommand{\secref}[1]{Sec.~\ref{sec:#1}}
\newcommand{\equref}[1]{Equ.~(\ref{eq:#1})}
\newcommand{\tabref}[1]{Tab.~\ref{tab:#1}}
\newcolumntype{P}[1]{>{\centering\arraybackslash}p{#1}}
\newcolumntype{M}[1]{>{\centering\arraybackslash}m{#1}}
\DeclareSIUnit{\sqrts}{\ensuremath{\sqrt{\text{\second}}}}
\newcommand{\changed}[2]{\deleted{#1} \added{#2}}
\let\oldcite\cite
\renewcommand{\cite}[1]{\mbox{\oldcite{#1}}}
\newcommand{\stkout}[1]{\ifmmode\text{\sout{\ensuremath{#1}}}\else\sout{#1}\fi}
\title{MSCEqF: A Multi State Constraint Equivariant Filter for Vision-aided Inertial Navigation}
\author{Alessandro Fornasier$^{1}$, Pieter van Goor$^{2}$, Eren Allak$^{1}$, Robert Mahony$^{2}$ and Stephan Weiss$^{1}$%
\thanks{Manuscript received: July, 2, 2023; Revised October, 13, 2023; Accepted November, 13, 2023.}%
\thanks{This paper was recommended for publication by Editor Pascal Vasseur upon evaluation of the Associate Editor and Reviewers' comments.
This work was supported by the European Union’s Horizon 2020 research and innovation program under grant agreement 871260 (BugWright2), and by the Army Research Office under Cooperative Agreement Number W911NF-21-2-0245. The views and conclusions contained in this document are those of the authors and should not be interpreted as representing the official policies, either expressed or implied, of the Army Research Office or the U.S. Government. The U.S. Government is authorized to reproduce and distribute preprints for Government purposes notwithstanding any copyright notation herein.
}%
\thanks{$^{1}$Alessandro Fornasier, Eren Allak and Stephan Weiss are with the Control of Networked Systems Group, University of Klagenfurt, Austria. {\tt\small \{name.surname\}@ieee.org}}%
\thanks{$^{2}$Pieter van Goor and Robert Mahony are with the System Theory and Robotics Lab, Australian National University, Australia. {\tt\small \{name.surname\}@anu.edu.au}}%
}%
\begin{document}
\bstctlcite{BSTcontrol}

\maketitle



\begin{abstract}
This letter re-visits the problem of \ac{vins} and presents a novel filter design we dub the multi state constraint \emph{equivariant} filter (MSC\emph{EqF}, in analogy to the well known \acs{msckf}).
We define a symmetry group and corresponding group action that allow specifically the design of an \acl{eqf} for the problem of \ac{vio} including \acs{imu} bias, and camera intrinsic and extrinsic calibration states. In contrast to state-of-the-art \ac{iekf} approaches that simply tack \ac{imu} bias and other states onto the $\SE_2(3)$ group, 
our filter builds upon a symmetry that properly includes all the states in the group structure. 
Thus, we achieve improved behavior, particularly when linearization points largely deviate from the truth (i.e., on transients upon state disturbances).
Our approach is \emph{inherently consistent} even during convergence phases from significant errors without the need for error uncertainty adaptation, \acl{oc}, or other consistency enforcing techniques. This leads to greatly improved estimator behavior for significant error and unexpected state changes during, e.g., long-duration missions. We evaluate our approach with a multitude of different experiments using three different prominent real-world datasets.

\end{abstract}

\begin{IEEEkeywords}
Vision-Based Navigation, Visual-Inertial SLAM
\end{IEEEkeywords}
\vspace{-10pt}

\section{Introduction and Related Work}

\IEEEPARstart{I}{n} the past years, \ac{vins} have shown remarkable success in estimating the position and orientation of robots by relying only on low-cost and lightweight \acp{imu} and cameras.

Popular algorithms for \ac{vins} include \acf{vio} and \ac{vislam}. 
\ac{vio} focuses only on the local surroundings and is, therefore, computationally simpler, less accurate, and it suffers from accumulated drift.
\ac{vins} algorithms can also suffer from inconsistencies~\cite{Hesch2014ConsistencyNavigation}. The classical \ac{ekf}-\acs{slam} algorithm suffers from overconfidence due to spurious information gain along the unobservable directions~\cite{Huang2008AnalysisSLAM}.
Different solutions have been proposed in literature to overcome the problems caused by inconsistencies. By manipulating the linearization point and enforcing the correct number of unobservable directions for the linearized system, Huang \emph{et al.} introduced the \ac{fej}~\cite{huang2009first}, whereas Hesch \emph{et al.} the \acf{oc}~\cite{Hesch2014ConsistencyNavigation} as techniques aiming at solving the inconsistency issue at the cost of sub-optimal linearization points. 
More recently, in~\cite{7523335}, Barrau and Bonnabel introduced the \ac{iekf} and showed that exploiting the natural symmetry of group affine systems leads to algorithms that are inherently consistent~\cite{barrau2015ekf}. Although the \ac{iekf} theory does not apply to \ac{ins} when \ac{imu} bias are explicitly considered, many authors~\changed{\cite{doi:10.1177/0278364919894385, Wu2017AnConsistency, Liu2023InGVIO:Odometry, Yang2022DecoupledNavigation}}{\cite{Heo2018ConsistentGroup, Brossard2018InvariantSLAM, Brossard2018UnscentedOdometry, doi:10.1177/0278364919894385, Wu2017AnConsistency, Liu2023InGVIO:Odometry, Yang2022DecoupledNavigation}} have exploited the Imperfect-\ac{iekf} framework~\cite{barrau:tel-01247723} to design \ac{vins} algorithms.

In very recent research, van Goor \emph{et al.} introduced the \ac{eqf}~\cite{VanGoor2020EquivariantSpaces, vanGoor2022EquivariantEqF} as a general filter design for systems on homogeneous spaces, and proposed a symmetry for fixed landmark measurements in the context of \ac{vislam}~\cite{9029435, van2020observer, van2020constructive, vanGoor2021AnOdometry, vanGoor2023EqVIO:Odometry}. Later, Fornasier \emph{et al.} proposed a novel symmetry for \ac{ins} that couples navigation states and \ac{imu} bias and developed an \ac{eqf} design for \ac{ins}~\cite{Fornasier2022EquivariantBiases, Fornasier2022OvercomingCalibration} that proved superior to state-of-the-art in terms of robustness to wrong initialization, transient behavior, and consistency properties. \added{In a very recent research study~\cite{Fornasier2023EquivariantSystems}, the same authors analyzed the theoretical properties of different symmetry groups when employed in designing filters for inertial navigation systems, and provided a discussion of the relative strengths and weaknesses of different filter algorithms.}

For \emph{vision} aided \ac{ins} systems, however, the lack of robustness against unexpected disturbances and the requirement for sophisticated tuning for a given environment and setup remain important limitations.
Real-world deployments are typically constrained to precise tuning and highly engineered codebases, where the core \ac{vio} algorithm is encompassed by numerous modules responsible for tasks such as initialization, failure detection, algorithm reset, and more. 
A \emph{people's \acl{vio}}, that is, an algorithm whose operation requires minimal knowledge, little to no tuning, and yet still functions in many different real-world scenarios, would enable a whole new tranch of real-world applications without the requirement of having highly trained engineers available. 
The present letter \added{builds upon the recent results in~\cite{Fornasier2022EquivariantBiases, Fornasier2022OvercomingCalibration, Fornasier2023EquivariantSystems}} and is a step towards enabling this goal. 

This perspective shifts the evaluation of algorithm performance from measures such as \ac{rmse}, accuracy, and precision, to measures such as the likelihood of failure for poor initial conditions or poor calibration. We acknowledge that state-of-the-art \ac{vins} approaches reached a plateau in the former metrics, but there is still a large room for improvement in the latter metrics. Furthermore, this letter does not claim completeness in comparative evaluations, rather, we present here our novel findings enabling a \ac{msceqf} as a step towards the \emph{people's \ac{vio}}; compare it against OpenVINS~\cite{Geneva2020OpenVINS:Estimation}, the best open-source available \ac{msckf}~\cite{Mourikis2007ANavigation}, and see an extensive comparison covering all suitable approaches as a work that goes beyond the scope of this letter.

Apart from the different metric evaluation, this work differentiates itself from state-of-the-art by extending insights on symmetries and \ac{eqf} design for fixed landmark \ac{vins}~\cite{vanGoor2021AnOdometry, vanGoor2023EqVIO:Odometry} and \ac{ins} including \ac{imu} bias into the symmetry~\changed{\cite{Fornasier2022EquivariantBiases, Fornasier2022OvercomingCalibration}}{\cite{Fornasier2022EquivariantBiases, Fornasier2022OvercomingCalibration, Fornasier2023EquivariantSystems}} to the idea of a multi state constraint but \emph{equivariant} \ac{vins}. 
%
To the best of our knowledge, the resulting algorithm is the first \added{ever, equivariant} multi state constraint
\changed{\acl{eqf}}{filter} for \ac{vio}. Our approach, dubbed \ac{msceqf}, \added{leverages a semi-direct product symmetry group, yielding improved linearized error dynamics when compared to other filter types~\cite{Fornasier2023EquivariantSystems}. Hence, the \ac{msceqf}} demonstrates consistency naturally without artificial changes of linearization points and very high robustness to poor extrinsic calibration. It not only handles significant absolute (calibration) errors but also addresses the concept of \emph{dealing with ``you don't know what you don't know''}, such as errors exceeding the prior covariance (e.g., sudden changes of calibrations states due to a disturbance during the operational phase of the robotic platform, where the state has converged already and the covariance has shrunk). 


To summarize, with this work, we make the following contributions:

{\bf (i):} We introduce the \ac{msceqf}; a novel multi state constraint \acl{vins} based on the \acl{eqf} framework, with camera and IMU self-calibration capabilities.
\deleted{Contrary to prior work, we derived the filter matrices in analytical form without resorting to numerical differentiation and ensuring no simplifications were made. The analytical derivation ensures higher portability and lower computational complexity enabling the use of \ac{msceqf} on compute-limited hardware.}

{\bf (ii):} We demonstrate that the proposed \ac{msceqf} achieves state-of-the-art accuracy, with superior robustness to significant absolute errors, as well as errors exceeding the prior covariance. 

Our experiments show that the \ac{msceqf} can be directly deployed in real-world scenarios with little tuning and no additional health-check modules. 
Furthermore, we show that the proposed \ac{msceqf} is a naturally consistent filter without the need for \ac{fej}, \ac{oc}, or other heuristic techniques.
\deleted{Furthermore} We implemented our framework as a stand-alone C++ library, and we made it source-available to the community\footnotemark. 
Wrappers for the standard middle-ware (e.g., ROS1, ROS2, etc.) will be provided such that code is available for direct use and comparison against other approaches. 
\added{We derived the filter matrices in analytical form without resorting to numerical differentiation, leading to code with higher portability and lower computational complexity, appropriate for compute-limited hardware, such as nano-drones, augmented reality devices, etc.}
\deleted{With this work, we focus on a statistical filter design rather than on a factor graphs algorithm as a relevant choice for applications where the computing power is still a limiting factor, such as nano-drones, augmented reality devices, etc. }

\footnotetext{\href{https://github.com/aau-cns/MSCEqF}{https://github.com/aau-cns/MSCEqF}}
\section{Mathematical Preliminaries and Notation}

\subsection{Vector and matrix notation}
Vectors describing physical quantities expressed in frame of reference \frameofref{A} are denoted by $\Vector{A}{v}{}$.
Rotation matrices encoding the orientation of a frame of reference \frameofref{B} with respect to a reference \frameofref{A} are denoted by $\Rot{A}{B}$; in particular, \deleted{$\Rot{A}{B}$ expresses a vector $\Vector{B}{v}{}$ defined in the \frameofref{B} frame of reference into a vector} ${\Vector{A}{v}{} = \Rot{A}{B}\Vector{B}{v}{}}$ \deleted{expressed in the \frameofref{A} frame of reference}.
${\eye_n \in \R^{n \times n}}$ denotes the $n$-dim identity matrix, and ${\mathbf{0}_{n \times m} \in \R^{n \times m}}$ denotes the zero matrix with ${n}$ rows and ${m}$ columns.

\subsection{Lie theory}
A Lie group $\grpG$ is a smooth manifold endowed with a smooth group structure.
For any $X, Y \in \grpG$, the group multiplication is denoted $XY$, the group inverse $X^{-1}$ and the identity element $I$. 

Given a Lie group $\grpG$, $\calG$ denotes the $\grpG$-Torsor~\cite{Mahony2013ObserversSymmetry}\deleted{, which is defined as the set of elements of $\grpG$ with the smooth manifold structure but without the group structure}.

\deleted{In this letter we will limit our consideration to matrix Lie groups and products of matrix Lie groups.}

For a given Lie group $\grpG$, the Lie algebra $\gothg$ is a vector space corresponding to the tangent space at the identity of the group, together with a bilinear non-associative map ${[\cdot, \cdot] \AtoB{\gothg\times\gothg}{\gothg}}$ called the \emph{Lie bracket}.
The Lie algebra $\gothg$ is isomorphic to a vector space $\R^{n}$ of dimension ${n =\mathrm{dim}\left(\gothg\right)}$.

Define the \emph{wedge} map and its inverse, the \emph{vee} map as linear isomorphisms between the vector space and the Lie algebra 
\begin{equation*}
    \left(\cdot\right)^{\wedge} \AtoB{\R^{n}}{\gothg} ,\qquad \left(\cdot\right)^{\vee} \AtoB{\gothg}{\R^{n}},
\end{equation*} 
such that $(\Vector{}{u}{}^\wedge)^\vee = \Vector{}{u}{}$, for all $\Vector{}{u}{} \in \mathbb{R}^n$.

For any $X, Y \in \grpG$, define the left and right translations
\begin{align*}
    &\textrm{L}_{X} \AtoB{\grpG}{\grpG}, \qquad \textrm{L}_{X}\left(Y\right) = XY ,\\
    &\textrm{R}_{X} \AtoB{\grpG}{\grpG}, \qquad \textrm{R}_{X}\left(Y\right) = YX . 
\end{align*}

The Lie group (`big') Adjoint matrix is defined by
\begin{equation*}
    \AdMsym{X} \AtoB{\R^{n}}{\R^{n}}, \qquad \AdMsym{X}\Vector{}{u}{} = \left(\td \textrm{L}_{X} \td \textrm{R}_{X^{-1}}\left[\Vector{}{u}{}^{\wedge}\right]\right)^\vee ,
\end{equation*}
for every $X \in \grpG$ and ${\Vector{}{u}{}^{\wedge} \in \gothg}$, where $\td \textrm{L}_{X}$, and $\td \textrm{R}_{X}$ denote the differentials of the left, and right translation, respectively.

The Lie algebra (`little') adjoint matrix is defined by
\begin{equation*}
    \adMsym{\Vector{}{u}{}} \AtoB{\R^{n}}{\R^{n}},\qquad \adMsym{\Vector{}{u}{}}\bm{v} = \left[\Vector{}{u}{}^{\wedge}, \bm{v}^{\wedge}\right]^{\vee},
\end{equation*}
for every ${\Vector{}{u}{}, {\Vector{}{v}{} \in \R^n}}$.
\deleted{The two adjoint matrices commute in the sense that ${\AdMsym{X}\adMsym{\Vector{}{u}{}}\Vector{}{v}{} = \adMsym{\AdMsym{X}\Vector{}{u}{}}\AdMsym{X}\Vector{}{v}{}}$,
for all $\Vector{}{u}{}, \Vector{}{v}{} \in \R^n$ and $X \in \grpG$.}

\subsection{Important matrix Lie groups}
The special orthogonal group $\SO(3)$, special Euclidean group $\SE(3)$, extended special Euclidean group $\SE_2(3)$, and their respective Lie algebras are defined, in matrix form, by
\begin{align*}
    \SO(3) &= \set{\mathbf{A} \in \R^{3\times3}}{
    \mathbf{A} \mathbf{A}^\top = \mathbf{I}_3, \; \det(\mathbf{A} ) = 1
    }, \\
    \so(3) &= \set{\bm{\omega}^{\wedge} \in \R^{3\times3}}{
    \bm{\omega}^{\wedge} = - {\bm{\omega}^{\wedge}}^\top
    }, \\
    \SE(3) &= \set{
    \begin{bmatrix}
        \mathbf{A} & \Vector{}{a}{} \\ \mathbf{0}_{1\times 3} & 1
    \end{bmatrix}
    \in \R^{4\times4}}{
    \mathbf{A} \in \SO(3), \; \Vector{}{a}{} \in \R^3
    }, \\
    \se(3) &= \set{ 
    \begin{bmatrix}
        \bm{\omega}^{\wedge} & \Vector{}{v}{} \\ \mathbf{0}_{1\times 3} & 0
    \end{bmatrix}
    \in \R^{4\times4}}{
    \bm{\omega}^{\wedge} \in \so(3), \; \Vector{}{v}{} \in \R^3
    }, \\
    \SE_2(3) &= \set{
    \begin{bmatrix}
        \mathbf{A} & \begin{matrix}\Vector{}{a}{} & \Vector{}{b}{}\end{matrix} \\ \mathbf{0}_{2\times 3} & \eye_2 
    \end{bmatrix}
    \in \R^{5\times5}}{
    \mathbf{A} \in \SO(3), \; \Vector{}{a}{}, \Vector{}{b}{} \in \R^{3}
    }, \\
    \se_2(3) &= \set{ 
    \begin{bmatrix}
        \bm{\omega}^{\wedge} & \begin{matrix}\Vector{}{v}{} & \Vector{}{w}{}\end{matrix} \\ \mathbf{0}_{2\times 3} & \mathbf{0}_{2\times 2}
    \end{bmatrix}
    \in \R^{5\times5}}{
    \bm{\omega}^{\wedge} \in \so(3), \; \Vector{}{v}{}, \Vector{}{w}{} \in \R^{3}
    }.
\end{align*}


\deleted{A right group action of a Lie group $\grpG$ on a differentiable manifold $\calM$ is a smooth map ${\phi \AtoB{\grpG\times \calM}{\calM}}$ that satisfies}
\begin{equation*}
    \deleted{\phi\left(I, \xi\right) = \xi ,\qquad \phi\left(X, \phi\left(Y, \xi\right)\right) = \phi\left(YX, \xi\right) ,}
\end{equation*}
\deleted{for all $X,Y \in \grpG$ and $\xi \in \calM$.
A right group action $\phi$ induces a family of diffeomorphisms ${\phi_X \AtoB{\calM}{\calM}}$ and smooth projections ${\phi_{\xi} \AtoB{\grpG}{\calM}}$.
The group action $\phi$ is said to be transitive if the induced projection $\phi_{\xi}$ is surjective.
In this case, $\calM$ is a homogeneous space of $\grpG$.}

\subsection{Semi-direct Bias group $\grpSD \coloneqq \SE_2(3) \ltimes \gothse(3)$}
The \emph{Semi-direct Bias group} ${\grpSD \coloneqq \SE_2(3) \ltimes \gothse(3)}$ \added{introduced in~\cite{Fornasier2023EquivariantSystems}}, is a group structure on the tangent bundle\deleted{~\cite{Jayaraman2020Black-ScholesGroup, ng2019attitude, Ng2020EquivariantGroups, Ng2020PoseKinematics}} ${\sdpgrpG := \mathbf{G} \ltimes \gothg}$ given by the semi-direct product of a group $\grpG$ with \changed{its Lie algebra $\gothg$}{a Lie subalgebra $\gothg$}. 

\changed{In particular, let $A,B \in \grpG$ and $a,b \in \gothg$ and define $X = \left(A, a\right)$ and $Y = \left(B, b\right)$ elements of the symmetry group $\sdpgrpG$. 
Group multiplication is defined to be the semi-direct product ${XY = \left(AB, a + \Adsym{A}{b}\right)}$. 
The inverse element is ${X^{-1} = \left(A^{-1}, -\Adsym{A^{-1}}{a}\right)}$ and  identity element is ${\left(I, 0\right)}$. 
For a more detailed introduction to the semi-direct product group and mathematical tools this work is built upon, we refer the reader to our previous works~\cite{Fornasier2022EquivariantBiases, Fornasier2022OvercomingCalibration}}{For a detailed introduction to \aclp{eqf} for \acl{ins}, semi-direct product groups and theoretical properties this work is built upon, we refer the reader to our previous works~\cite{Fornasier2022EquivariantBiases, Fornasier2022OvercomingCalibration, Fornasier2023EquivariantSystems}. Moreover,~\cite{Fornasier2023EquivariantSystems} discuss the advantages of semi-direct product symmetries for filter design and compares it to classical solutions such as the \acs{mekf} and the \ac{iekf}}.

\subsection{Intrinsics group $\IN$}
In this work, we recognized that elements of the camera intrinsics matrix~\cite{Hartley2004MultipleVision} form a Lie group. Thus, we introduce the intrisincs group $\IN$, as the matrix Lie group defined by
\begin{align*}
    \IN = \left\{
    \mathbf{K} = \begin{bmatrix}
         a & 0 & x \\ 0 & b & y \\ 0 & 0 & 1
    \end{bmatrix} \in \R^{3\times 3}
    \;\middle|\;
    a,b > 0, \; x,y\in \R
    \right\}.
\end{align*}
This matrix representation is associated with the standard camera intrinsics matrix, well-known in computer vision.
A typical element of $\IN$ may be written as $\mathbf{K} = (a,b,x,y)$.
Let $\mathbf{K}_1, \mathbf{K}_2 \in \IN$, then
\begin{align*}
    \mathbf{K}_1\mathbf{K}_2 &= (a_1 a_2, b_1 b_2, x_1 + a_1 x_2, y_1 + b_1 y_2), \\
    \mathbf{K}_1^{-1} &= (a_1^{-1}, b_1^{-1}, -a_1^{-1} x_1, -b_1^{-1} y_1).
\end{align*}
To the authors' understanding, exploiting the group structure of the $\IN$ group in equivariant or invariant \ac{vins} design represents a novel approach to this work.

\subsection{Useful maps}
For all $\Vector{}{v}{} = \left(x,y,z\right) \in \R^3$, define the maps
\begin{align*}
     &\pi_{Z_1}\left(\cdot\right) \AtoB{\R^{3}}{\R^{3}},
     && \pi_{Z_1}\left(\Vector{}{v}{}\right) \coloneqq \frac{\Vector{}{v}{}}{z},\\
     &\Xi\left(\cdot\right) \AtoB{\R^3}{\R^{3 \times 4}},
     &&\Xi\left(\Vector{}{v}{}\right) = \begin{bmatrix}
     x & 0 & z & 0\\
     0 & y & 0 & z\\
     0 & 0 & 0 & 0
     \end{bmatrix} \in \R^{3 \times 4}.
 \end{align*}
For all $\Vector{}{a}{}, \Vector{}{b}{}, \Vector{}{c}{} \in \R^3 \st (\Vector{}{a}{}, \Vector{}{b}{}, \Vector{}{c}{}) \in \R^9$, define the maps
 \begin{align*}
    &\Pi\left(\cdot\right) \AtoB{\se_2(3)}{\se(3)}, \quad \Pi\left(\left(\Vector{}{a}{}, \Vector{}{b}{}, \Vector{}{c}{}\right)^{\wedge}\right) = \left(\Vector{}{a}{}, \Vector{}{b}{}\right)^{\wedge} \in \se(3), \\
    &\Upsilon\left(\cdot\right) \AtoB{\se_2(3)}{\se(3)}, \quad \Upsilon\left(\left(\Vector{}{a}{}, \Vector{}{b}{}, \Vector{}{c}{}\right)^{\wedge}\right) = \left(\Vector{}{a}{}, \Vector{}{c}{}\right)^{\wedge} \in \se(3),
\end{align*}
For all $X = \left(A, a\right) \in \SE(3) \st A \in \SO(3), a \in \R^3$, define
\begin{equation*}
    \Gamma\left(\cdot\right) \AtoB{\SE(3)}{\SO(3)}, \quad \Gamma\left(X\right) = A \in \SO(3).
\end{equation*}
For all $X = \left(A, a, b\right) \in \SE_2(3) \st A \in \SO(3), a, b \in \R^3$, define
\begin{align*}
    &\chi\left(\cdot\right) \AtoB{\SE_2(3)}{\SE(3)}, \quad \chi\left(X\right) = \left(A, a\right) \in \SE(3),\\
    &\Theta\left(\cdot\right) \AtoB{\SE_2(3)}{\SE(3)}, \quad \Theta\left(X\right) = \left(A, b\right) \in \SE(3).
\end{align*}
\section{Visual Inertial Navigation System}\label{sec:vins_sys}

\subsection{System definition}
Consider a mobile platform equipped with a camera observing global visual features $\Vector{G}{p}{f}$, and an \ac{imu} providing biased acceleration and angular velocity measurements, denoted by ${\Vector{I}{w}{} = \left(\Vector{I}{\omega}{}, \Vector{I}{a}{}\right)}$.
Define ${\Pose{G}{I} = \left(\Rot{G}{I}, \Vector{G}{v}{I}, \Vector{G}{p}{I}\right)}$ to be the extended pose of the system, where $\Rot{G}{I}$ corresponds to the rigid body orientation, whereas $\Vector{G}{p}{I}$ and $\Vector{G}{v}{I}$ denote the \ac{imu} position and velocity with respect to the global frame, respectively.
Define ${\PoseP{G}{I} = \left(\Rot{G}{I}, \Vector{G}{p}{I}\right)}$\deleted{, and ${\VAtt{G}{I} = \left(\Rot{G}{I}, \Vector{G}{v}{I}\right)}$}.
Define $\Vector{I}{b}{} = \left(\Vector{I}{b}{\bm{\omega}}, \Vector{I}{b}{\bm{a}}\right)$ to be the gyroscope and accelerometer biases, respectively.
Let $g$ denote the magnitude of the acceleration due to gravity, and let $\Vector{G}{e}{3}$ denote the direction of gravity in the global frame.
Finally, define $\PoseS{I}{C}$ to be the camera extrinsic calibration, and $\mathbf{K}$ be the camera intrinsic calibration.

For the sake of readability, from now on, we suppress all the subscripts and superscripts that are not strictly required.

Define the matrices $\mathbf{W}, \mathbf{B}, \mathbf{D}, \mathbf{G}$ to be
\begin{align*}
    &\mathbf{W} = \begin{bmatrix}
    \angv^{\wedge} & \acc & \mathbf{0}_{3\times 1}\\
    \mathbf{0}_{1\times 3} & 0 & 0\\
    \mathbf{0}_{1\times 3} & 0 & 0\\
    \end{bmatrix}, \;\;
    \mathbf{B} = \begin{bmatrix}
    \bw^{\wedge} & \ba & \mathbf{0}_{3\times 1}\\
    \mathbf{0}_{1\times 3} & 0 & 0\\
    \mathbf{0}_{1\times 3} & 0 & 0\\
    \end{bmatrix},\\
    &\mathbf{D} = \begin{bmatrix}
    \mathbf{0}_{3\times 3} & \mathbf{0}_{3\times 1} & \mathbf{0}_{3\times 1}\\
    \mathbf{0}_{1\times 3} & 0 & 1\\
    \mathbf{0}_{1\times 3} & 0 & 0\\
    \end{bmatrix}, \;\;
    \mathbf{G} = \begin{bmatrix}
    \mathbf{0}_{3\times 3} & g\ethree & \mathbf{0}_{3\times 1}\\
    \mathbf{0}_{1\times 3} & 0 & 0\\
    \mathbf{0}_{1\times 3} & 0 & 0\\
    \end{bmatrix}.
\end{align*}

Finally, the visual-inertial navigation system is written
\begin{subequations}\label{eq:vins}
    \begin{align}
        &\dotPose{}{} = \Pose{}{}\left(\mathbf{W} - \mathbf{B} + \mathbf{D}\right) + \left(\mathbf{G} - \mathbf{D}\right)\Pose{}{} ,\\
        &\dotVector{}{b}{} =  \Vector{}{\tau}{} ,\\
        &\dot{\mathbf{S}} = \mathbf{S}\Vector{}{\mu}{}^{\wedge} ,\\
        &\dot{\mathbf{K}} = \mathbf{K}\Vector{}{\zeta}{}^{\wedge} ,
    \end{align}
\end{subequations}
where $\bm{\tau}, \bm{\mu}, \bm{\zeta}$ are used to model the deterministic dynamics of the bias and calibration states and are zero when these states are modeled as constants, as they are in our formulation.

Define $\xi_{I} = \left(\Pose{}{}, \Vector{}{b}{}\right) \in \torSE_2(3) \times \R^{6}$ to be the inertial navigation state.
Define $\xi_{S} = \left(\mathbf{S}, \mathbf{K}\right) \in \torSE(3) \times \torIN(3)$ to be the camera calibration state.
Then the full system state is defined as $\xi = \left(\xi_{I}, \xi_{S}\right) \in \calM \coloneqq \torSE_2(3) \times \R^{6} \times \torSE(3) \times \torIN(3)$.
Define $u = \left(\Vector{}{w}{}, \Vector{}{\tau}{}, \Vector{}{\mu}{}, \Vector{}{\zeta}{}\right) \in \mathbb{L} \subset \R^{18}$ to be the system's input.
Note that in this work, \deleted{as in other works on multi-state constraint VIO~\cite{Wu2017AnConsistency, Sun2018RobustFlight, li2013high},} visual features are not considered as part of the state since the dependency of measurement on features is removed through nullspace projection.

Without loss of generality, let us consider the case of a single feature $\Vector{}{p}{f}$.
The camera measurement is modeled as the measurement of the bearing of the feature $\Vector{}{p}{f}$ seen from the camera. 
\begin{equation}\label{eq:h}
    h\left(\xi, \Vector{}{p}{f}\right) = \mathbf{K}\pi_{Z_1}\left(\left(\mathbf{P}\mathbf{S}\right)^{-1} * \Vector{}{p}{f}\right) ,
\end{equation}
where the operation $* \AtoB{\torSE(3) \times \R^3}{\R^3}$ is defined by ${\PoseP{}{} * \Vector{}{v}{} = \Rot{}{}\Vector{}{v}{} + \Vector{}{p}{}}$  for all ${ \PoseP{}{} = \left(\Rot{}{}, \Vector{}{p}{}\right) \in \torSE(3),\; \Vector{}{v}{} \in \R^3}$.

\subsection{Symmetry of the visual-inertial navigation system}
The symmetry for the inertial navigation state $\xi_{I}$ is given by the Semi-Direct symmetry group $\grpSD \coloneqq \left(\SE_2(3) \ltimes \gothse(3)\right)$, the symmetry for the extrinsic calibration state is given by the special Euclidean group $\SE(3)$, and the symmetry for the intrinsic calibration state is given by the intrinsics group $\IN$.
The complete symmetry for the visual-inertial navigation system is thus defined to be the product group $\grpG \coloneqq \grpSD \times \SE(3) \times \IN$.

Let ${X = \left(\left(D,\delta\right), E, L\right) \in \grpG}$, with ${D = \left(A, a, b\right) \in \SE_2(3)}$ such that ${A \in \SO(3),\; a,b \in \R^{3}}$. Define the subgroups ${B = \chi\left(D\right) \in \SE(3)}$, and ${C = \Theta\left(D\right) \in \SE(3)}$. Finally, define ${E \in \SE(3)}$, and ${L \in \IN}$.
\begin{lemma}
Define ${\phi \AtoB{\grpG \times \calM}{\calM}}$ as
\begin{equation}\label{eq:phi}
    \phi\left(X, \xi\right) \coloneqq \left(\Pose{}{}D, \AdMsym{B^{-1}}\left(\Vector{}{b}{} - \delta^{\vee}\right), C^{-1}\mathbf{S}E, \mathbf{K}L\right) \in \calM .
\end{equation}
Then, $\phi$ is a transitive right group action of $\grpG$ on $\calM$.
\end{lemma}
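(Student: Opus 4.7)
The plan is to verify the three conditions for a transitive right group action in turn: the identity axiom $\phi(I_{\grpG}, \xi) = \xi$, the compatibility axiom $\phi(X, \phi(Y, \xi)) = \phi(YX, \xi)$, and the transitivity condition that for any pair $\xi_1, \xi_2 \in \calM$ some $X \in \grpG$ takes $\xi_1$ to $\xi_2$. Because the state space $\calM$ and the symmetry group $\grpG$ are both Cartesian products, I would check each coordinate of~\eqref{eq:phi} separately. The identity axiom is immediate: the identity of $\grpG$ is $((I_{\SE_2(3)}, 0_{\gothse(3)}), I_{\SE(3)}, I_{\IN})$, so $B = \chi(I) = I$, $C = \Theta(I) = I$, and each component of $\phi(I, \xi)$ collapses to the corresponding component of $\xi$.

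For the compatibility axiom I would write $X = ((D_X,\delta_X), E_X, L_X)$, $Y = ((D_Y,\delta_Y), E_Y, L_Y)$, denote $B_\star = \chi(D_\star)$ and $C_\star = \Theta(D_\star)$, and unfold $\phi(X, \phi(Y,\xi))$ componentwise. The $\Pose{}{}$-component reduces to associativity of matrix multiplication in $\SE_2(3)$, the intrinsics component to associativity in $\IN$, and the extrinsics component uses $C_X^{-1}(C_Y^{-1} S E_Y)E_X = (C_Y C_X)^{-1} S (E_Y E_X)$, so the main fact needed is that $\chi$ and $\Theta$ are group homomorphisms $\SE_2(3) \to \SE(3)$; this follows by a direct computation from the block structure of $\SE_2(3)$, since $(A_1,a_1,b_1)(A_2,a_2,b_2) = (A_1A_2,\, a_1+A_1a_2,\, b_1+A_1b_2)$.

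The main obstacle is the bias component, because it mixes the $\gothse(3)$-valued offset $\delta$ with the $\Ad_B$ action in the semi-direct product structure of $\grpSD$. Here I would compute
\begin{equation*}
\AdMsym{B_X^{-1}}\!\left(\AdMsym{B_Y^{-1}}(b-\delta_Y^\vee) - \delta_X^\vee\right) = \AdMsym{(B_Y B_X)^{-1}}(b-\delta_Y^\vee) - \AdMsym{B_X^{-1}}\delta_X^\vee ,
\end{equation*}
and on the other side expand $\phi(YX,\xi)$ using the semi-direct product multiplication $YX = ((D_Y D_X,\, \delta_Y + \AdMsym{B_Y}\delta_X),\, E_Y E_X,\, L_Y L_X)$. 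Then $\chi(D_Y D_X) = B_Y B_X$, and distributing $\AdMsym{(B_Y B_X)^{-1}}$ across $b - \delta_Y^\vee - \AdMsym{B_Y}\delta_X^\vee$ with the identity $\AdMsym{B_X^{-1}}\AdMsym{B_Y^{-1}}\AdMsym{B_Y} = \AdMsym{B_X^{-1}}$ yields the same expression as above. The two sides therefore match, confirming the right action axiom.

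Finally, for transitivity, given $\xi_1 = (\Pose{}{1}, \Vector{}{b}{1}, \mathbf{S}_1, \mathbf{K}_1)$ and $\xi_2 = (\Pose{}{2}, \Vector{}{b}{2}, \mathbf{S}_2, \mathbf{K}_2)$ I would solve \eqref{eq:phi} directly: set $D = \Pose{}{1}^{-1}\Pose{}{2} \in \SE_2(3)$, which determines $B = \chi(D)$ and $C = \Theta(D)$; then choose $L = \mathbf{K}_1^{-1}\mathbf{K}_2 \in \IN$, $E = \mathbf{S}_1^{-1} C \mathbf{S}_2 \in \SE(3)$, and $\delta^\vee = \Vector{}{b}{1} - \AdMsym{B}\Vector{}{b}{2} \in \R^6$. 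Each assignment lives in the correct factor of $\grpG$, so the resulting $X \in \grpG$ satisfies $\phi(X,\xi_1)=\xi_2$, proving transitivity.
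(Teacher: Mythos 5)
Your proof is correct. Note that the paper itself states this lemma without proof (deferring implicitly to the prior works on the semi-direct bias group), so there is no in-paper argument to compare against; the direct componentwise verification you give is the natural and essentially only route. You correctly identify and handle the one genuinely delicate point: the semi-direct product multiplication on $\grpSD = \SE_2(3) \ltimes \se(3)$ must act on the $\se(3)$ factor through $\AdMsym{\chi(D)}$ rather than the full $\SE_2(3)$ adjoint (which does not preserve the $\se(3)$ subalgebra), and your compatibility check for the bias component, together with the observation that $\chi$ and $\Theta$ are group homomorphisms, is exactly what makes the extrinsics and bias components close up. The explicit solution for $X$ in the transitivity step is also right, with $\delta^{\vee} = \Vector{}{b}{1} - \AdMsym{B}\Vector{}{b}{2}$ giving a valid element of $\se(3)$ via the wedge isomorphism.
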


\subsection{Lifted system}
The implementation of the equivariant filter (EqF) requires a lift ${\Lambda \AtoB{\calM \times \vecL}{\gothg}}$ to define a lifted system on the symmetry group $\grpG$ that projects down to the original system dynamics via the proposed group action $\phi$.
The transitivity of $\phi$ guarantees the existence of such a lift~\cite{Mahony2020EquivariantDesign}, and the following theorem provides an explicit form for a lift of the system studied in this paper.

\begin{theorem}
Define the map ${\Lambda \AtoB{\calM \times \vecL}{\gothg}}$ by
\begin{align*}
    \Lambda\left(\xi, u\right) &\coloneqq \left(\left(\Lambda_1\left(\xi, u\right), \Lambda_2\left(\xi, u\right)\right), \Lambda_3\left(\xi, u\right), \Lambda_4\left(\xi, u\right)\right),
\end{align*}
where ${\Lambda_1 \AtoB{\calM \times \vecL} \se_2(3)}$, ${\Lambda_2 \AtoB{\calM \times \vecL} \se(3)}$, ${\Lambda_3 \AtoB{\calM \times \vecL} \se(3)}$, and ${\Lambda_4 \AtoB{\calM \times \vecL} \mathfrak{in}}$ are given by
\begin{subequations}\label{eq:lift}
    \begin{align}
        &\Lambda_1\left(\xi, u\right) \coloneqq \left(\mathbf{W} - \mathbf{B} + \mathbf{D}\right) + \Pose{}{}^{-1}\left(\mathbf{G} - \mathbf{D}\right)\Pose{}{} ,\\
        &\Lambda_2\left(\xi, u\right) \coloneqq \left(\adMsym{\Vector{}{b}{}^{\wedge}}\left(\Pi\left(\Lambda_1\left(\xi, u\right)\right)^{\vee}\right) - \Vector{}{\tau}{}\right)^{\wedge},\\
        &\Lambda_3\left(\xi, u\right) \coloneqq \left(\AdMsym{\mathbf{S}^{-1}}\left(\Upsilon\left(\Lambda_1\left(\xi, u\right)\right)^\vee\right) + \Vector{}{\mu}{}\right)^{\wedge},\\
        &\Lambda_4\left(\xi, u\right) \coloneqq \Vector{}{\zeta}{}^{\wedge},
    \end{align}
\end{subequations}
Then ${\Lambda}$ is a lift for the system in~\equref{vins} with respect to the symmetry group $\grpG$.
\end{theorem}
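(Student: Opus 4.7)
The plan is to verify directly that $\Lambda$ satisfies the defining identity of a lift,
\[
    \left.\ddt\right|_{t=0} \phi\bigl(\exp(t\,\Lambda(\xi,u)),\, \xi\bigr) \;=\; f(\xi,u),
\]
where $f$ denotes the right-hand side of~\equref{vins}. Because $\grpG = \grpSD \times \SE(3) \times \IN$ and $\calM$ are both product structures and the four slots of $\phi$ in~\equref{phi} act independently on the four slots of $\xi$, this single identity decouples into four component equations, one for each $\Lambda_i$. Transitivity of $\phi$ (Lemma~2.1) together with~\cite{Mahony2020EquivariantDesign} guarantees that some lift exists, so the task reduces to showing that the explicit formulae in~\equref{lift} realise it.

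I would start with the two easiest slots. For the pose component, writing $D(t)$ for the $\SE_2(3)$ factor of $\exp(t\Lambda(\xi,u))$, one has $D(0)=I$ and $D'(0)=\Lambda_1(\xi,u)$, so $\frac{\td}{\td t}|_{t=0}\Pose{}{}D(t) = \Pose{}{}\Lambda_1$; equating this to $\dotPose{}{}$ and left-multiplying by $\Pose{}{}^{-1}$ reproduces the stated formula. A brief side check is needed to confirm that the resulting expression actually lies in $\se_2(3)$: the non-$\se_2(3)$ contributions of $\mathbf{D}$ inside $\mathbf{W}-\mathbf{B}+\mathbf{D}$ and inside $\Pose{}{}^{-1}\mathbf{D}\Pose{}{}$ cancel in the bottom rows. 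For the intrinsics, $\frac{\td}{\td t}|_{t=0}\mathbf{K}L(t) = \mathbf{K}\Lambda_4$ must equal $\mathbf{K}\Vector{}{\zeta}{}^\wedge$, yielding $\Lambda_4 = \Vector{}{\zeta}{}^\wedge$ immediately.

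The main computation, and the part I expect to be trickiest, is the bias slot, where the $\AdMsym{B^{-1}}$ factor and the $\se(3)$-valued additive term interact through the semi-direct product structure. Applying the chain rule at $t=0$ (with $B(0)=I$, $\delta(0)=0$) together with the standard identity $\frac{\td}{\td t}|_{t=0}\AdMsym{\exp(-tZ)} = -\adMsym{Z^\vee}$ gives
\[
    \left.\ddt\right|_{t=0}\AdMsym{B(t)^{-1}}\!\bigl(\Vector{}{b}{}-\delta(t)^\vee\bigr) = -\adMsym{\Pi(\Lambda_1)^\vee}\Vector{}{b}{} - \Lambda_2^\vee,
\]
since $\chi$ has tangent map $\Pi$ at the identity and hence $B'(0)=\Pi(\Lambda_1)$. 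Equating this to $\dotVector{}{b}{}=\Vector{}{\tau}{}$ and using the antisymmetry $-\adMsym{u}v = \adMsym{v}u$ rearranges to exactly the formula for $\Lambda_2$ in~\equref{lift}.

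The extrinsics slot mirrors the bias argument but is cleaner. Differentiating $C(t)^{-1}\mathbf{S}E(t)$ at $t=0$ with $C(0)=E(0)=I$ yields $-\Upsilon(\Lambda_1)\mathbf{S}+\mathbf{S}\Lambda_3$, where $\Upsilon$ is the tangent of $\Theta$ at the identity. Equating to $\mathbf{S}\Vector{}{\mu}{}^\wedge$, left-multiplying by $\mathbf{S}^{-1}$, and passing to vee coordinates via $(\mathbf{S}^{-1}\Upsilon(\Lambda_1)\mathbf{S})^\vee = \AdMsym{\mathbf{S}^{-1}}\Upsilon(\Lambda_1)^\vee$ recovers the stated formula for $\Lambda_3$. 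With all four component identities verified, $\Lambda$ is a lift of the system for $\phi$.
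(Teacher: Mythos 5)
The paper states this theorem without an accompanying proof, and your direct verification of the lift condition $\left.\ddt\right|_{t=0}\phi(\exp(t\Lambda(\xi,u)),\xi)=f(\xi,u)$ component by component is correct and is exactly the argument the construction presupposes: each of the four formulae, including the cancellation of the non-$\se_2(3)$ parts of $\mathbf{D}$ in $\Lambda_1$, the $-\adMsym{\Pi(\Lambda_1)^\vee}\Vector{}{b}{}-\Lambda_2^\vee$ computation for the bias slot, and the $\AdMsym{\mathbf{S}^{-1}}$ conjugation for the extrinsics, checks out. One small imprecision: the four slots of $\phi$ do not act fully independently, since both the bias and extrinsics components depend on $D$ through $B=\chi(D)$ and $C=\Theta(D)$, but your actual slot-wise differentiations account for these cross-dependencies correctly, so the conclusion stands.
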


The existence of the lift allows the construction of a lifted system on the symmetry group~\cite{Mahony2020EquivariantDesign}.
Let $X \in \grpG$ be the state of the lifted system, and let $\xizero = \left(\mathring{\Pose{}{}}, \mathring{\Vector{}{b}{}}, \mathring{\PoseS{}{}}, \mathring{\mathbf{K}}\right) \in \calM$ be an arbitrarily chosen element of the original state in~\equref{vins}, called the origin.
Then the lifted system is defined
\begin{equation}\label{eq:lifted_vins}
    \dot{X} = \td L_{X}\Lambda\left(\phi_{\xizero}\left(X\right), u\right).
\end{equation}
\section{Multi State Constraint Equivariant Filter}\label{sec:msceqf}
\subsection{\added{Filter state definition}}
\added{Define ${\hat{X} = \left(\left(\left(\hat{D}, \hat{\delta}\right), \hat{E}, \hat{L}\right), \hat{E}_1, \cdots, \hat{E}_k\right) \in \grpG \times \SE(3)^k}$ to be the filter's state evolving on the symmetry group. Similarly to the original formulation~\cite{Mourikis2007ANavigation} we maintain a sliding window of $k$ past $\hat{E}$ elements in the state of the filter, corresponding to the different times a camera measurement was collected.}

\subsection{Error dynamics and state transition matrix}\label{sec:errdyn}
Let ${e = \phi_{\hat{X}^{-1}}\left(\xi\right)}$ denote the equivariant error. Normal coordinates~\cite{vanGoor2022EquivariantEqF} of the state space $\calM$ in a neighborhood of the origin $\xizero$ are ${\Vector{}{\varepsilon}{} = \vartheta\left(e\right) \coloneqq \log\left(\phi_{\xizero}^{-1}\left(e\right)\right)^{\vee} \in \R^{25}}$, where ${\log \AtoB{\grpG}{\gothg}}$ is the logarithm of the symmetry group.

Recall the derivation of the linearized error dynamics in~\cite{vanGoor2022EquivariantEqF}
\begin{align*}
    &\dot{\varepsilon} \approx \mathbf{A}_{t}^{0}\varepsilon ,\\
    &\mathbf{A}_{t}^{0} = \Fr{e}{\xizero}\vartheta\left(e\right)\Fr{\xi}{\hat{\xi}}\phi_{\hat{X}^{-1}}\left(\xi\right)\Fr{E}{I}\phi_{\hat{\xi}}\left(E\right)\;\cdot\\
    & \quad\;\;\;\cdot \Fr{\xi}{\phi_{\hat{X}}\left(\xizero\right)}\Lambda\left(\xi, u\right)\Fr{e}{\xizero}
    \phi_{\hat{X}}\left(e\right)\Fr{\varepsilon}{\mathbf{0}}\vartheta^{-1}\left(\varepsilon\right) .
\end{align*}

The state matrix $\mathbf{A}_{t}^{0}$ is given by
\begin{equation}
    \mathbf{A}_{t}^{0} = 
    \begin{bmatrix}
        \prescript{}{1}{\mathbf{A}} & \prescript{}{2}{\mathbf{A}} & \Vector{}{0}{9 \times 6} & \Vector{}{0}{9 \times 4}\\
        \prescript{}{3}{\mathbf{A}} & \prescript{}{4}{\mathbf{A}} & \Vector{}{0}{6 \times 6} & \Vector{}{0}{6 \times 4}\\
        \prescript{}{5}{\mathbf{A}} & \prescript{}{6}{\mathbf{A}} & \prescript{}{7}{\mathbf{A}} & \Vector{}{0}{6 \times 4}\\
        \Vector{}{0}{4 \times 9} & \Vector{}{0}{4 \times 6} & \Vector{}{0}{4 \times 6} & \Vector{}{0}{4 \times 4}
    \end{bmatrix} \in \R^{25 \times 25},
\end{equation}
where
\begin{align*}
    & \prescript{}{1}{\mathbf{A}} =
    \begin{bNiceMatrix}[margin]
        \mathbf{\Psi} - \adMsym{\mathring{\Vector{}{b}{}}} & \Block{1-2}{\Vector{}{0}{6 \times 3}}\\
        \left(\mathring{\Rot{}{}}^T\mathring{\Vector{}{v}{}}\right)^{\wedge} - \hat{b}^{\wedge}\mathring{\Vector{}{b}{\omega}}^{\wedge} & \eye_3 & \Vector{}{0}{3 \times 3}
    \end{bNiceMatrix} \in \R^{9 \times 9},\\
    & \prescript{}{2}{\mathbf{A}} =
    \begin{bmatrix}
        \eye_3 & \Vector{}{0}{3 \times 3} \\
        \Vector{}{0}{3 \times 3} & \eye_3 \\
        \hat{b}^{\wedge} & \Vector{}{0}{3 \times 3}
    \end{bmatrix} \in \R^{9 \times 6}, \\
    & \prescript{}{3}{\mathbf{A}} = \begin{bmatrix}
        \adMsym{\mathring{\Vector{}{b}{}}}\mathbf{\Psi} - \adMsym{\left(\AdMsym{\hat{B}}\Vector{}{w}{} +  \hat{\delta}^{\vee} + \Vector{}{\theta}{}\right)}\adMsym{\mathring{\Vector{}{b}{}}} & \mathbf{0}_{6 \times 3}
    \end{bmatrix} \in \R^{6 \times 9},\\
    & \prescript{}{4}{\mathbf{A}} = \adMsym{\left( \AdMsym{\hat{B}}\Vector{}{w}{} + \hat{\delta}^{\vee} + \Vector{}{\theta}{}\right)}\in \R^{6 \times 6},\\
    & \prescript{}{5}{\mathbf{A}} = \AdMsym{\mathring{\mathbf{S}}^{-1}}
    \begin{bmatrix}
    -\Vector{}{\psi}{1}^{\wedge} & \Vector{}{0}{3 \times 3} & \Vector{}{0}{3 \times 3}\\
    -\Vector{}{\psi}{3}^{\wedge} - \mathring{\Vector{}{b}{\omega}}^{\wedge}\hat{b}^{\wedge} & \eye_3 & -\Vector{}{\psi}{2}^{\wedge}
    \end{bmatrix} \in \R^{6 \times 9},\\
    & \prescript{}{6}{\mathbf{A}} = \AdMsym{\mathring{\mathbf{S}}^{-1}}
    \begin{bmatrix}
        \eye_3 & \Vector{}{0}{3 \times 3}\\
        \hat{b}^{\wedge} & \Vector{}{0}{3 \times 3}
    \end{bmatrix} \in \R^{6 \times 6},\\
    & \prescript{}{7}{\mathbf{A}} = \adMsym{\left(\AdMsym{\mathring{\mathbf{S}}^{-1}}\Vector{}{\varrho}{}\right)} \in \R^{6 \times 6} ,
\end{align*}
with
\begin{alignat*}{2}
    &\Vector{}{\psi}{1} = \hat{A}\Vector{}{\omega}{} + \delta_{\omega}^{\vee} \in \R^{3}, & 
    & \Vector{}{\theta}{} = \left(\Vector{}{0}{3 \times 1}, g\left(\mathring{\Rot{}{}}^T\ethree\right)\right) \in \R^6,\\
    &\Vector{}{\psi}{2} = \Vector{}{\psi}{1} - \mathring{\Vector{}{b}{\omega}} \in \R^{3},&
    &\mathbf{\Psi} =
    \begin{bmatrix}
        \Vector{}{0}{3 \times 3} & \Vector{}{0}{3 \times 3} \\
        g\left(\mathring{\Rot{}{}}^T\ethree\right)^{\wedge} & \Vector{}{0}{3 \times 3}
    \end{bmatrix} \in \R^{6 \times 6} ,\\
    &\Vector{}{\psi}{3} = \hat{a} - \Vector{}{\psi}{1}^{\wedge}\hat{b} \in \R^{3}, &
    &\Vector{}{\varrho}{} = \left(\Vector{}{\psi}{2}, \Vector{}{\psi}{4}\right) \in \R^{3}. \\
    &\Vector{}{\psi}{4} = \hat{a} + \mathring{\Rot{}{}}^T\mathring{\Vector{}{v}{}} - \Vector{}{\psi}{2}^{\wedge}\hat{b} \in \R^{3}.
\end{alignat*}

The discrete-time state transition matrix is defined by ${\mathbf{\Phi} = \exp\left(\mathbf{A}_{t}^{0} \Delta T\right)}$ for time steps $\Delta T$.

\subsection{Multi state constraint}
Consider the measurement model in~\equref{h}, applying the action of the symmetry group to the state space in~\equref{phi} yields
\begin{equation}
    h\left(\phi_{X}\left(\xi\right)\right) = \mathbf{K}L \pi_{Z_1}\left(E^{-1}\left(\mathbf{P}\mathbf{S}\right)^{-1} * \Vector{}{p}{f}\right)
\end{equation}
Recall the equivariant error ${e = \phi_{\hat{X}^{-1}}\left(\xi\right) =  \vartheta^{-1}\left(\varepsilon\right)}$.
Define ${\tilde{y} = \varsigma\left(\Vector{}{p}{f}\right) - \varsigma\left(\hatVector{}{p}{f}\right)}$, where $\varsigma\left(\cdot\right)$ represents the chosen feature parametrization.
The true feature can then be written as ${\Vector{}{p}{f} = \varsigma^{-1}\left(\varsigma\left(\hatVector{}{p}{f}\right) + \tilde{y}\right)}$. 
Therefore, the measurement model in~\equref{h} can be linearized at ${\varepsilon = \mathbf{0}}$, and ${\tilde{y} = \mathbf{0}}$ as follows:
\begin{equation}\label{eq:hlin}
\begin{split}
    h\left(\xi, \Vector{}{p}{f}\right) &= h\left(\phi_{\hat{X}}\left(\vartheta^{-1}\left(\varepsilon\right)\right), \varsigma^{-1}\left(\varsigma\left(\hatVector{}{p}{f}\right) + \tilde{y}\right)\right)\\
    &= h\left(\hat{\xi}, \hatVector{}{p}{f}\right) + \mathbf{C}_t\varepsilon + \mathbf{C}^{f}_t\tilde{y} + \cdots .
\end{split}
\end{equation}

Let us derive the ${\mathbf{C}_t}$, and ${\mathbf{C}^{f}_t}$ for the \emph{anchored inverse depth} parametrization~\cite{Civera2008InverseSLAM, Mourikis2007ANavigation} of the feature. Note that 
the matrix ${\mathbf{C}^{f}_t}$ can be computed for any desired parametrization.

Let ${^A\mathbf{P} ^A\mathbf{S}}$ be the pose of the anchor, defined as the pose of the camera where the feature ${\Vector{}{p}{f}}$ has been first seen. Define the feature in the anchor frame as ${\Vector{}{a}{f} = \left(^A\mathbf{P} ^A\mathbf{S}\right)^{-1} * \Vector{}{p}{f}}$, with ${\Vector{}{a}{f} = \left(a_{f_x}, a_{f_y}, a_{f_z}\right) \in \R^3}$. The anchored inverse depth parametrization is written
\begin{align}\label{eq:aid_param}
    &\Vector{}{z}{} = \varsigma\left(\Vector{}{p}{f}\right) = \left(\Vector{}{z}{1}, z_2\right) = \left(\left(\frac{a_{f_x}}{a_{f_z}}, \frac{a_{f_y}}{a_{f_z}}\right), \frac{1}{a_{f_z}}\right) ,\\
    &\Vector{}{p}{f} = \varsigma^{-1}\left(\Vector{}{z}{}\right) = \left(^A\mathbf{P} ^A\mathbf{S}\right) * \begin{bmatrix}
        \frac{\Vector{}{z}{1}}{z_2}\\
        \frac{1}{z_2}
    \end{bmatrix} .
\end{align}
Then the matrix ${\mathbf{C}^{f}_t}$ is written
\begin{equation}\label{eq:Cf}
    \begin{split}
        \mathbf{C}^{f}_t\tilde{y} &= 
        \mathring{\mathbf{K}}
        \hat{L}
        d_{\pi_{Z_1}}
        \Gamma\left(\left(\hat{\mathbf{P}}\hat{\mathbf{S}}\right)^{-1} {^A\hat{\mathbf{P}}} {^A\hat{\mathbf{S}}}\right)\frac{1}{\hat{z}_2}
        \begin{bmatrix}
            \eye_2 & -\frac{\hatVector{}{z}{1}}{\hat{z}_2}\\
            \Vector{}{0}{1 \times 2} & -\frac{1}{\hat{z}_2}
        \end{bmatrix}
        \tilde{y} \\
        &= 
        \mathring{\mathbf{K}}
        \hat{L}
        d_{\pi_{Z_1}}
        \Gamma\left({\hat{E}^{-1}} {^A\hat{E}}\right)\frac{1}{\hat{z}_2}
        \begin{bmatrix}
            \eye_2 & -\frac{\hatVector{}{z}{1}}{\hat{z}_2}\\
            \Vector{}{0}{1 \times 2} & -\frac{1}{\hat{z}_2}
        \end{bmatrix}
        \tilde{y},
    \end{split}
\end{equation}
where we have used ${\hat{\xi} \coloneqq \phi_{\hat{X}}\left(\xizero\right)}$ to map between the estimated state in the homogeneous space $\hat{\xi}$, and the estimated state in the symmetry group $\hat{X}$. Therefore
\begin{equation*}
    \left(\hat{\mathbf{P}}\hat{\mathbf{S}}\right)^{-1} {^A\hat{\mathbf{P}}} {^A\hat{\mathbf{S}}} = \hat{E}^{-1}\mathring{\PoseS{}{}}^{-1}\hat{C}\hat{C}^{-1}\mathring{\PoseP{}{}}^{-1}\mathring{\PoseP{}{}}^A\hat{C}^A\hat{C}^{-1}\mathring{\PoseS{}{}}^A\hat{E} = \hat{E}^{-1}{^A\hat{E}} .
\end{equation*}

According to~\cite{vanGoor2022EquivariantEqF}, the ${\mathbf{C}_t}$ matrix is defined by
\begin{equation}\label{eq:Ct}
    \begin{split}
        \mathbf{C}_t\varepsilon &= \Fr{\xi}{\hat{\xi}}h\left(\xi\right)\Fr{e}{\mathring{\xi}}\phi_{\hat{X}}\Fr{\varepsilon}{\mathbf{0}}\vartheta^{-1}\left(\varepsilon\right)\left[\varepsilon\right]\\
        &= \mathring{\mathbf{K}}
        \hat{L}
        d_{\pi_{Z_1}}\Gamma\left(\hat{E}^{-1}\right)
        \begin{bmatrix}
        \left({^AE}\hatVector{}{a}{f}\right)^{\wedge} & -\eye_3
        \end{bmatrix}
        \varepsilon_{E}\; -\\
        &- \mathring{\mathbf{K}}
        \hat{L}
        d_{\pi_{Z_1}}\Gamma\left(\hat{E}^{-1}\right)
        \begin{bmatrix}
        \left({^AE}\hatVector{}{a}{f}\right)^{\wedge} & -\eye_3
        \end{bmatrix}
        \varepsilon_{^AE}\; +\\
        &+
        \mathring{\mathbf{K}}\Xi\left(\hat{L}\pi_{Z_1}\left(\hat{E}^{-1}{^AE}\hatVector{}{a}{f}\right)\right)
        \varepsilon_L ,
    \end{split}
\end{equation}
where $\varepsilon_{E}$, and $\varepsilon_{^AE}$ represent respectively the error in normal coordinates for the element $E$ of the symmetry group corresponding to the most recent pose and to the anchor pose, whereas $\varepsilon_{L}$ represent the error in normal coordinates that is related to the camera intrinsics.

\deleted{From the preceding expressions, the structure of the \ac{msceqf} becomes clear. Similarly to the original formulation~\cite{Mourikis2007ANavigation} we maintain a sliding window of $E$ elements in the state of the filter, corresponding to the different times a camera measurement is collected.}

To compute the matrix ${\mathbf{C}_t}$ in~\equref{Ct}, an estimate of the feature position in the anchor frame is required. To this end, when a feature has been seen from multiple views a linear-nonlinear least square problem can be solved~\cite{Mourikis2007ANavigation, Geneva2020OpenVINS:Estimation}.

Finally, to remove the dependency of the features, and hence perform a filter update, we employ nullspace marginalization of the matrix ${\mathbf{C}^{f}_t}$ in \equref{hlin}, according to the original formulation~\cite{Mourikis2007ANavigation}.
\begin{figure*}
\centering
\includegraphics[width=\linewidth]{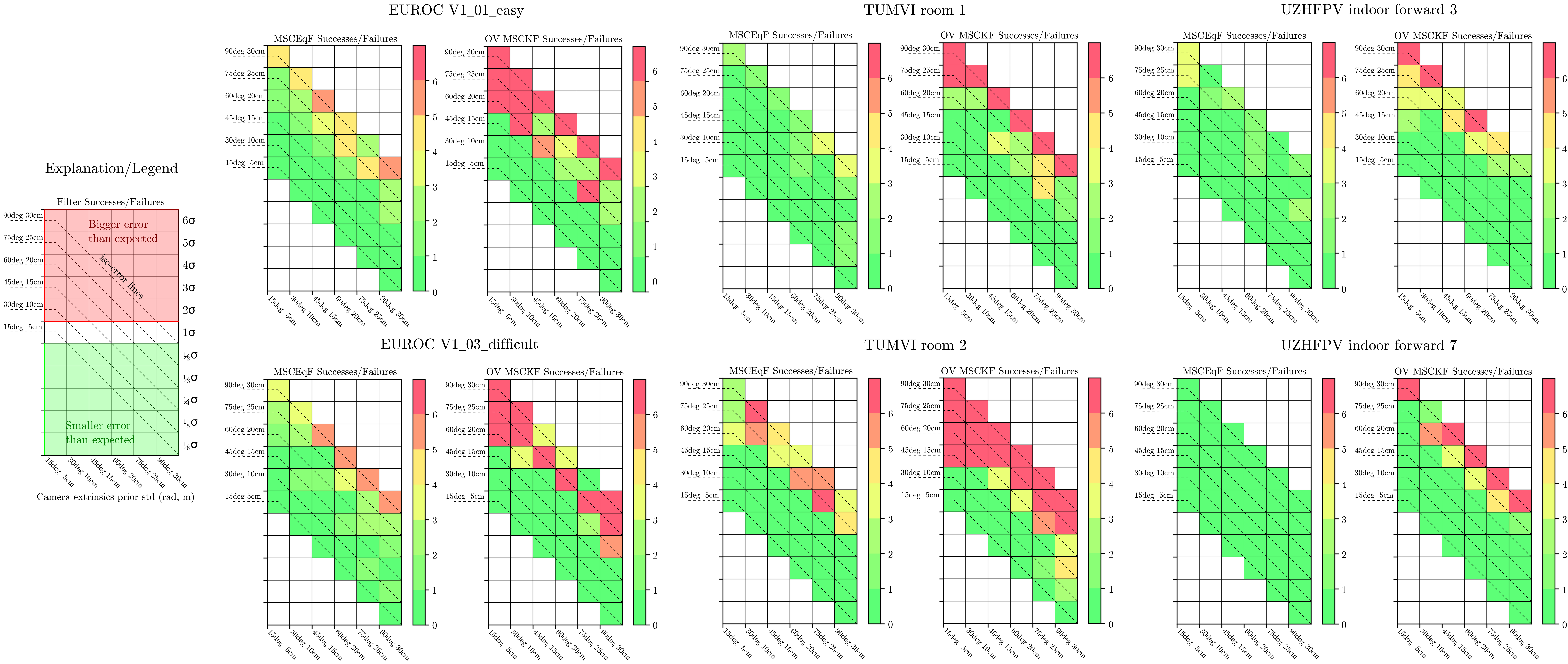}
\caption{Results of the experiment \deleted{aiming at} evaluating the robustness of the proposed \ac{msceqf} and \changed{OpenVINS}{OpenVINS's \ac{msckf}}. In these grid plots, the x-axis is the prior standard deviation the estimators are set with. The y-axis is how many $\sigma$-levels that error corresponds to. Labeled diagonal dashed lines represent iso-error lines (lines along with the error is constant). The bottom part of each grid represents expected errors, thus errors falling within $\sfrac{1}{6} \sigma$-$\sfrac{1}{2} \sigma$, whereas the top part of each grid represents unexpected errors, thus errors falling within $2 \sigma$-$6 \sigma$. According to the colorbar, the color of each cell shows the number of failures.}
\label{fig:grid}
\end{figure*}

\section{Experiments}\label{sec:exp}
In this letter, we perform a series of experiments to evaluate the accuracy, consistency, and, more importantly, robustness of the proposed \ac{msceqf}. We perform many experiments on real-world data to evaluate robustness to expected and unexpected errors in the camera extrinsic calibration. In all these experiments, we limit our comparison to filter-based \ac{msckf} algorithms for \ac{vio}, and in particular, to the best available one we believe represents the state-of-the-art, that is OpenVINS~\cite{Geneva2020OpenVINS:Estimation}. For a fair comparison, we turned off OpenVINS's persistent features (\ac{slam} features), and only compare against its pure \changed{multi state constraint filter}{\ac{msckf}} part. Furthermore, in all the experiments, OpenVINS\added{'s {\ac{msckf}}} parameters were specifically tuned, for each dataset, according to the authors' suggested parameters. In contrast, the proposed \ac{msceqf} shares the same tuning parameters across all the experiments and datasets.

\subsection{Robustness}
Robustness is an important property of a modern filter-based \acl{vio} algorithm. It is the ability to function with significant yet known errors, as well as the ability to deal with unknown unknowns. In simpler terms, it refers to how well an algorithm performs under non-ideal conditions, such as imperfect tuning parameters, poor calibration, or unexpected changes in the sensor's extrinsic parameters during field operations.

To assess the robustness of the proposed \ac{msceqf} and \changed{OpenVINS}{the \ac{msckf}}, we ran a series of experiments using widely-known dataset for evaluating \ac{vio} algorithms. Specifically, the Euroc dataset~\cite{Burri2016TheDatasets}, the TUM-VI dataset~\cite{Schubert2018TheOdometry}, and the UZH-FPV dataset~\cite{Delmerico19icra}. For each dataset, we selected two sequences and ran each estimator $6\times6\times6 = 216$ times (for a total number of runs of $2592$). In these experiments, we intentionally initialized the filters with incorrect camera extrinsic parameters, introducing errors in six steps ranging from ($15$\si[per-mode = symbol]{\degree}, $0.05$\si[per-mode = symbol]{\meter}), to ($90$\si[per-mode = symbol]{\degree}, $0.3$\si[per-mode = symbol]{\meter}). For each error step, we ran the estimators with six different priors (initial covariance) accounting for initial calibration errors in the range of the six error steps. For each pair (prior, error) we run each estimator six times. Finally, for each individual run, we classified an estimator as converged or diverged based on a position error threshold.

Based on the results of the experiment in \figref{grid}, we derive the following noteworthy observations. In absolute terms, there seems to be an upper limit of absolute error that, no matter the prior, makes the estimators diverge. Although this limit highly depends on the dataset, for each of the tested sequences, the proposed \ac{msceqf} possesses a higher error limit, and hence improved robustness to known absolute error.
In relative terms, the proposed \ac{msceqf} seems to deal better with unknown errors since the line at which the estimator fails is straight and does not bend towards the left side as it appears to happen for \changed{OpenVINS}{the \ac{msckf}}. Encouraged by these results, we ran an additional experiment on the \emph{V1\_01\_easy} sequence of the Euroc dataset, introducing new, smaller priors and errors to effectively evaluate whether the estimators are able to manage errors that are smaller in absolute terms but outside the prior covariance. \figref{grid_extension} clearly shows that the \ac{msceqf} is indeed a more robust filter, able to deal with unexpected errors. \added{Finally, \figref{convergence} shows the convergence of the camera extrinsic parameters for both filters evaluated on the Euroc \emph{V1\_01\_easy} sequence, with an initial error of ($30$\si[per-mode = symbol]{\degree}, $0.1$\si[per-mode = symbol]{\meter}) and an initial covariance to match the error. The error plots clearly show that the proposed \ac{msceqf} not only is a more robust filter, but it also converges faster.}

Quantifying robustness in robotics, however, remains an ongoing challenge. 
In the presented evaluation, we have chosen the camera extrinsic calibration as a state subjected to error. Even though static and dynamic initialization approach exists~\cite{Dong-Si2012EstimatorCalibration, Campos2019FastSLAM} for such a problem, in our formulation, extrinsic parameters are treated as regular state variables, and our proposed algorithm showcases inherent robustness by successfully attaining reliable estimation, for both expected and unexpected errors, eliminating the need of any auxiliary module.
This characteristic sets our algorithm apart from conventional \ac{vio} algorithms, emphasizing its superior robustness.

\begin{figure}[!t]
    \centering
    \includegraphics[width=\linewidth]{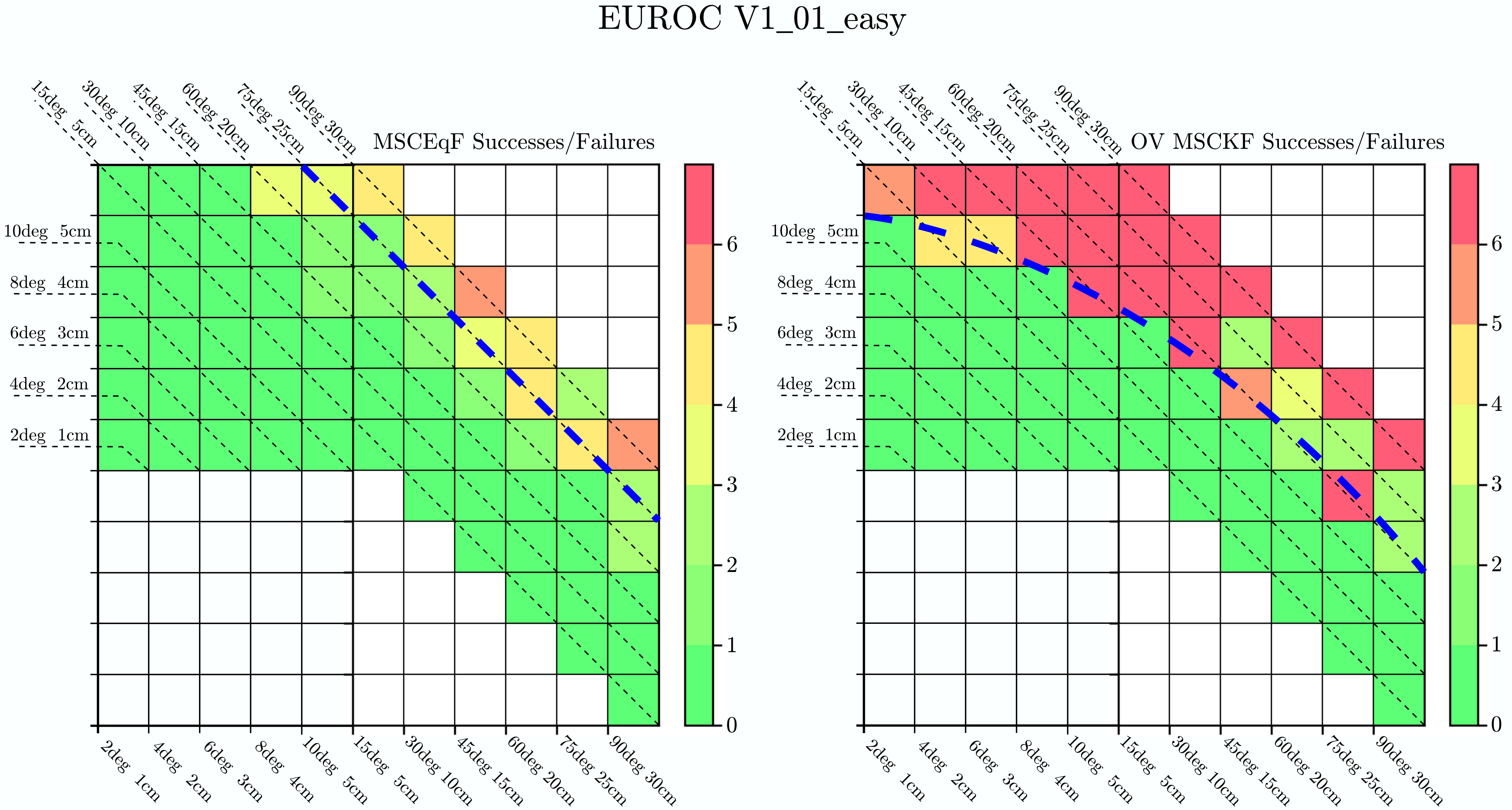}
    \caption{Grid plot showing the robustness of the proposed \ac{msceqf} compared to \changed{OpenVINS}{OpenVINS's \ac{msckf}} for unexpected errors, thus the ability to deal with \emph{you don't know what you don't know}. The x-axis is the prior standard deviation the estimators are set with. The y-axis is how many $\sigma$-levels that error corresponds to. Diagonal dashed lines represent iso-error lines. The blue bold dashed line is the limit at which each estimator fails. According to the colorbar, the color of each cell represents the number of failures.}
    \label{fig:grid_extension}
\end{figure}

\subsection{Accuracy}
Our next experiment focuses on the classical and widely-used metric for evaluating the performance of \acl{vio} algorithms~\cite{Delmerico2018ARobots}, namely the \ac{rmse} of the \ac{ate}. For this experiment, we ran the proposed \ac{msceqf} and OpenVINS\added{'s \ac{msckf}} on all Euroc sequences~\cite{Burri2016TheDatasets}\deleted{, except for the \emph{V2\_03\_difficult} sequence, due to unreliable results for both algorithms}. The results presented in \tabref{rmse} demonstrate that the proposed \ac{msceqf} achieves state-of-the-art accuracy comparable to \changed{OpenVINS}{the \ac{msckf}}. It should be noted that in our evaluation, we aligned each estimate with the groundtruth using the initial state rather than finding the optimal alignment that minimizes the error throughout the entire trajectory.

\begin{table*}
\renewcommand{\arraystretch}{1.15}
\setlength\tabcolsep{4.0pt}
\centering
\scriptsize
\caption{Attitude (A), and position (P) Absolute Trajectory Error (ATE) RMSE on Euroc dataset}
\vspace{-5pt}
\begin{tabular}{c|cc|cc||c|cc|cc||c|cc|cc}
\specialrule{.1em}{.1em}{.1em}
\textsc{Sequence} & \multicolumn{2}{c|}{\textsc{MSCEqF}} & \multicolumn{2}{c||}{\textsc{OV MSCKF}} & \textsc{Sequence} & \multicolumn{2}{c|}{\textsc{MSCEqF}} & \multicolumn{2}{c||}{\textsc{OV MSCKF}} & \textsc{Sequence} & \multicolumn{2}{c|}{\textsc{MSCEqF}} & \multicolumn{2}{c}{\textsc{OV MSCKF}} \\ 
\hline
& A [\si[per-mode = symbol]{\radian}] & P [\si[per-mode = symbol]{\meter}] & A [\si[per-mode = symbol]{\radian}] & P [\si[per-mode = symbol]{\meter}] & & A [\si[per-mode = symbol]{\radian}] & P [\si[per-mode = symbol]{\meter}] & A [\si[per-mode = symbol]{\radian}] & P [\si[per-mode = symbol]{\meter}] & & A [\si[per-mode = symbol]{\radian}] & P [\si[per-mode = symbol]{\meter}] & A [\si[per-mode = symbol]{\radian}] & P [\si[per-mode = symbol]{\meter}] \\ 
\hline
V1\_01\_easy & $0.07$ & \color{blue}{$\mathbf{0.24}$} & \color{blue}{$\mathbf{0.05}$} & $0.36$ & V2\_02\_medium & $0.08$ & $0.55$ & \color{blue}{$\mathbf{0.03}$} & \color{blue}{$\mathbf{0.17}$} & MH\_03\_medium & $0.02$ & \color{blue}{$\mathbf{0.34}$} & \color{blue}{$\mathbf{0.01}$} & $0.41$\\
V1\_02\_medium & $0.03$ & \color{blue}{$\mathbf{0.20}$} & \color{blue}{$\mathbf{0.02}$} & $0.22$ & V2\_03\_difficult\footnotemark{} & \color{blue}{$\mathbf{0.03}$} & $0.39$ & \color{blue}{$\mathbf{0.03}$} & \color{blue}{$\mathbf{0.28}$} & MH\_04\_difficult & \color{blue}{$\mathbf{0.03}$} & \color{blue}{$\mathbf{0.53}$} & $0.04$ & $0.61$\\
V1\_03\_difficult & $0.05$ & $0.30$ & \color{blue}{$\mathbf{0.02}$} & \color{blue}{$\mathbf{0.18}$} & MH\_01\_easy & \color{blue}{$\mathbf{0.05}$} & \color{blue}{$\mathbf{0.29}$} & \color{blue}{$\mathbf{0.05}$} & $0.42$ & MH\_05\_difficult & $0.03$ & \color{blue}{$\mathbf{0.70}$} & \color{blue}{$\mathbf{0.02}$} & $0.78$\\
V2\_01\_easy & \color{blue}{$\mathbf{0.02}$} & \color{blue}{$\mathbf{0.13}$} & $0.05$ & $0.18$ & MH\_02\_easy & \color{blue}{$\mathbf{0.01}$} & \color{blue}{$\mathbf{0.38}$} & $0.03$ & $0.54$ & & & & & \\
\specialrule{.1em}{.1em}{.1em} 
\end{tabular}
\label{tab:rmse}
\end{table*}

\subsection{Consistency}
An estimator is said to be consistent if the estimated covariance of the error reflects its real distribution; in other words, an estimator is consistent if the error is unbiased and within the sigma bounds of the estimated covariance. 
\added{Consistency of the proposed \ac{msceqf} is proven by compatibility of the group action $\phi$ in \equref{phi}, and invariance of the lift $\Lambda$ in \equref{lift}, to reference frame transormations~\cite{vanGoor2023EqVIO:Odometry, Wu2017AnConsistency}. This ensures that the filter does not gain spurious information along the unobservable directions.}


\begin{theorem}
Define $H{ \coloneqq (R_{H}, 0, p_{H}) \in \SE_2(3)}$, where ${R_{H} \in \SE_{\ethree}(3)}$ represent a anti-clockwise rotation about the vertical axis $\ethree$, and $p_{H}$ represent the a translation. Define the right group action ${\alpha \AtoB{\SE_2(3) \times \calM}{\calM}}$ such that ${\alpha(H, \xi) \coloneqq (H^{-1}\Pose{}{}, \Vector{}{b}{}, \mathbf{S}, \mathbf{K})}$ represents a change of reference, from \frameofref{G} to \frameofref{H} that leaves the direction of gravity unchanged.

Then the action of the symmetry group on the state space $\phi$ and the lift $\Lambda$ are respectively compatible and invariant with respect to change of reference, that is
\begin{align*}
    &\alpha(H,\phi(X,\xi)) = \phi(X, \alpha(H, \xi)),\\
    &\Lambda(\alpha(H,\xi), u) = \Lambda(\xi, u) .
\end{align*}
\end{theorem}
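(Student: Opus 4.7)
The theorem has two claims. Both reduce to direct computations from the definitions of $\phi$ in~\equref{phi}, $\Lambda$ in~\equref{lift}, and the action $\alpha$. The common underlying fact is that $\alpha(H,\cdot)$ acts on $\xi$ only by left-multiplying the inertial pose $\Pose{}{}$ by $H^{-1}$, while leaving $\Vector{}{b}{}$, $\mathbf{S}$ and $\mathbf{K}$ fixed.

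For compatibility, I would simply expand both sides of $\alpha(H,\phi(X,\xi))=\phi(X,\alpha(H,\xi))$. Inspection of~\equref{phi} shows that the last three slots of $\phi(X,\xi)$ depend only on $(X,\Vector{}{b}{},\mathbf{S},\mathbf{K})$ and not on $\Pose{}{}$, so $\alpha$ acts as the identity on them on both sides. In the first slot, $\phi$ performs the right multiplication $\Pose{}{}\mapsto \Pose{}{}D$ while $\alpha$ performs the left multiplication $\Pose{}{}\mapsto H^{-1}\Pose{}{}$; left and right multiplication commute, so both sides yield $H^{-1}\Pose{}{}D$ in the first slot and the identity maps in the remaining three. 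Compatibility follows.

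For invariance of $\Lambda$, only $\Lambda_1$ depends on $\Pose{}{}$, through the term $\Pose{}{}^{-1}(\mathbf{G}-\mathbf{D})\Pose{}{}$. Substituting $\Pose{}{}\mapsto H^{-1}\Pose{}{}$ turns this into $\Pose{}{}^{-1}H(\mathbf{G}-\mathbf{D})H^{-1}\Pose{}{}$, so it suffices to prove the pointwise identity $H(\mathbf{G}-\mathbf{D})H^{-1}=\mathbf{G}-\mathbf{D}$. The matrix $\mathbf{G}$ has its sole nonzero entry in position $(3,4)$ and $\mathbf{D}$ in position $(4,5)$; writing both as rank-one outer products in $\R^5$ reduces the identity to the four elementary facts $R_H\ethree=\ethree$ (because $R_H$ is a rotation about $\ethree$), $H$ has zero velocity component, and rows/columns $4$ and $5$ of $H$ and $H^{-1}$ agree with those of the identity when contracted against the nonzero entries of $\mathbf{G}$ and $\mathbf{D}$. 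A short block-matrix calculation then gives $H\mathbf{G}H^{-1}=\mathbf{G}$ and $H\mathbf{D}H^{-1}=\mathbf{D}$, so $\Lambda_1$ is invariant. Because $\Lambda_2$ and $\Lambda_3$ depend on $\xi$ only through $\Lambda_1$, $\Vector{}{b}{}$ and $\mathbf{S}$, and $\Lambda_4=\Vector{}{\zeta}{}^{\wedge}$ has no $\xi$-dependence at all, the invariance propagates to the full lift.

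The only non-bookkeeping step is the identity $H(\mathbf{G}-\mathbf{D})H^{-1}=\mathbf{G}-\mathbf{D}$. Its content is exactly the geometric restriction placed on $H$: yaw-only rotation plus arbitrary horizontal/vertical translation is precisely the subgroup of $\SE_2(3)$ that preserves both the gravity direction encoded by $\mathbf{G}$ and the kinematic position/velocity coupling encoded by $\mathbf{D}$. Once this is checked, the rest of the theorem follows immediately.
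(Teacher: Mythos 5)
Your proposal is correct and follows essentially the same route as the paper: compatibility via commutativity of the left translation by $H^{-1}$ with the right translation by $D$ (the other three slots being untouched by $\alpha$), and invariance by reducing to the conjugation identity $H(\mathbf{G}-\mathbf{D})H^{-1}=\mathbf{G}-\mathbf{D}$, which holds because $R_H\ethree=\ethree$ and $H$ has zero velocity component. Your explicit justification that checking $\Lambda_1$ suffices (since $\Lambda_2,\Lambda_3$ depend on $\xi$ only through $\Lambda_1$, $\Vector{}{b}{}$ and $\mathbf{S}$, and $\Lambda_4$ not at all) is a small but welcome addition that the paper leaves implicit.
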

\begin{proof}
\begin{align*}
    \phi(X, \alpha(H, \xi)) &= ((H^{-1}\Pose{}{})D, \AdMsym{B^{-1}}\left(\Vector{}{b}{} - \delta^{\vee}\right), C^{-1}\mathbf{S}E, \mathbf{K}L)\\
    &= (H^{-1}\Pose{}{}D, \AdMsym{B^{-1}}\left(\Vector{}{b}{} - \delta^{\vee}\right), C^{-1}\mathbf{S}E, \mathbf{K}L)\\
    &= \alpha(H,\phi(X,\xi)) ,
\end{align*}
as required.

To prove the invariance of $\Lambda$ to the action $\alpha$, it is sufficient to show that ${\Lambda_1(\alpha(H,\xi), u) = \Lambda_1(\xi, u)}$.
\begin{align*}
\Lambda_1(\alpha(H,\xi), u) &= (\mathbf{W} - \mathbf{B} + \mathbf{D}) + (\Pose{}{}^{-1}H)(\mathbf{G} - \mathbf{D})(H^{-1}\Pose{}{}) \\
&= (\mathbf{W} - \mathbf{B} + \mathbf{D}) + \Pose{}{}^{-1}(H(\mathbf{G} - \mathbf{D})H^{-1})\Pose{}{} \\
&= (\mathbf{W} - \mathbf{B} + \mathbf{D}) + \Pose{}{}^{-1}(H(\mathbf{G} - \mathbf{D})H^{-1})\Pose{}{} \\
&= (\mathbf{W} - \mathbf{B} + \mathbf{D}) + \Pose{}{}^{-1}(\mathbf{G} - \mathbf{D})\Pose{}{} \\
&= \Lambda_1(\xi, u) ,
\end{align*}
where we have used the fact that ${H(\mathbf{G-D})H^{-1} = \mathbf{G-D}}$. Specifically
\begin{align*}
    H(\mathbf{G-D})H^{-1} &= \begin{bmatrix}
    \mathbf{0}_{3\times 3} & R_{H}g\ethree & \mathbf{0}_{3\times 1}\\
    \mathbf{0}_{1\times 3} & 0 & -1\\
    \mathbf{0}_{1\times 3} & 0 & 0\\
    \end{bmatrix} \\
    &= \begin{bmatrix}
    \mathbf{0}_{3\times 3} & g\ethree & \mathbf{0}_{3\times 1}\\
    \mathbf{0}_{1\times 3} & 0 & -1\\
    \mathbf{0}_{1\times 3} & 0 & 0\\
    \end{bmatrix} \\
    &= \mathbf{G-D} .
\end{align*}
It is straightforward to see that ${R_{H}g\ethree = g\ethree}$ since $R_{H}$ is a rotation about the $\ethree$ axis. This completes the proof.
\end{proof}


In this final experiment, we employed the pose (orientation and position) \ac{anees} as a metric to analyze the consistency of the proposed \ac{msceqf}. In particular, we used the VINSEval framework~\cite{Fornasier2021} to generate a photorealistic synthetic dataset of 25 runs of the same trajectory, with the same noise statistics but different noise realizations.

The \ac{anees} for the \ac{msceqf} was computed according to the following formula
\begin{equation*}
    \text{ANEES} = \frac{1}{Mn}\sum_{i=1}^{M}\bm{\varepsilon}_i^T\bm{\Sigma}_i^{-1}\bm{\varepsilon}_i ,
\end{equation*}
where $M$ is the number of runs, $n = dim\left(\bm{\varepsilon}\right)$ is the dimension of the error $\bm{\varepsilon}$, and $\bm{\Sigma}$ is the covariance of the error. The error $\bm{\varepsilon} = \log_{\SE(3)}\left(\mathring{\PoseP{}{}}^{-1}\PoseP{}{}\hatPoseP{}{}^{-1}\mathring{\PoseP{}{}}\right)^{\vee}$ is the pose components of the equivariant error defined in \secref{errdyn}.

The resulting \ac{anees} shown in \figref{anees} fluctuates around a computed average of $1.0$ and is not increasing or decreasing over time. This is a very similar average than \ac{fej} estiamtors~\cite{Geneva2020OpenVINS:Estimation, Chen2022FEJ2:Design}, but without requiring artificial modification of the linearization points to achieve consistency.

\begin{figure}[!t]
    \centering
    \includegraphics[width=\linewidth]{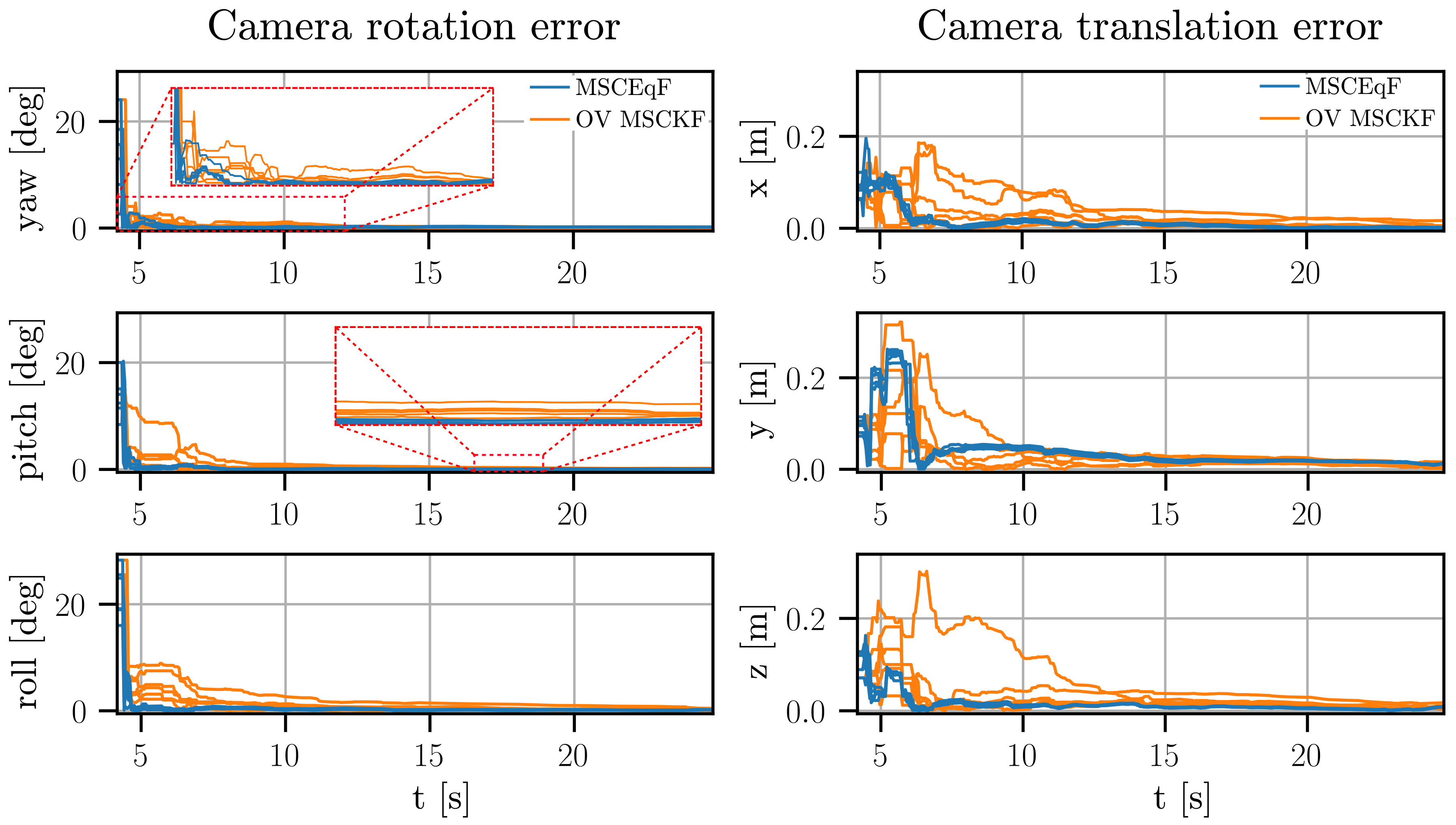}
    \caption{\added{Absolute errors of camera extrinsic parameters for the proposed \ac{msceqf}, and OpenVINS's MSCKF. The plots show the convergence performance of the filters evaluated on the Euroc \emph{V1\_01\_easy} sequence, for $6$ runs, with an initial error of ($30$\si[per-mode = symbol]{\degree}, $0.1$\si[per-mode = symbol]{\meter}).}}
    \label{fig:convergence}
\end{figure}

\begin{figure}[!t]
\centering
\includegraphics[width=\linewidth]{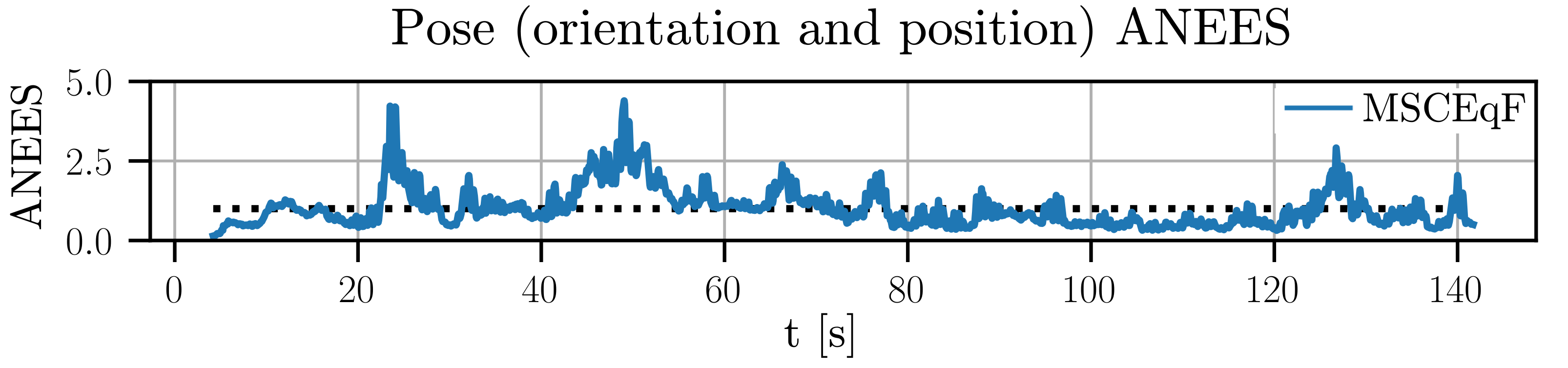}
\caption{Pose (orientation and position) \acs{anees} of the proposed \ac{msceqf} \deleted{executed} for 25 runs on a custom \deleted{photorealistic} dataset generated with the VINSEval framework. \deleted{The plot shows that the \acs{anees} is fluctuating around an average of $1.0$ (dashed line) and does not increase or decrease over time.}}
\label{fig:anees}
\end{figure}

\footnotetext{Due to non-deterministic results with varying in accuracy, we reported the best result out of $5$ runs}
\section{Conclusion}\label{sec:conc}
This letter presented the \emph{\acf{msceqf}}. A novel \acl{eqf} formulation for the \ac{vio} problem, capable of camera intrinsic and extrinsic self-calibration.
With our approach, we address the need for an \ac{vio} algorithm that achieves state-of-the-art accuracy and consistency while minimizing the need for sophisticated tuning and remaining robust against expected \emph{and unexpected} errors.
Through the presented experiments, we have demonstrated that the proposed \ac{msceqf} successfully tackles these requirements. It exhibits robustness against both high absolute errors and unexpected errors that exceed the prior covariance. Furthermore, the \ac{msceqf} has been proven to be a naturally consistent estimator, achieving accuracy comparable to a state-of-the-art \ac{msckf} algorithm but without the need for additional health-check nor consistency enforcing modules and heuristics. \added{Future work includes the extension of the proposed \ac{msceqf} with a polar symmetry for explicit \ac{slam} features~\cite{vanGoor2023EqVIO:Odometry}}

\bibliographystyle{IEEEtran}
\bibliography{bibliography/extra, bibliography/IEEEabrv, bibliography/main, bibliography/b042115, bibliography/vio}


\end{document}